\journalname{Machine Learning}
\newcommand{\newcite}[1]{\cite{#1}}
\newcommand{\searn}{\textsc{Searn}}
\newcommand{\bayesum}{\textsc{BayeSum}}
\newcommand{\mmmn}{M$^3$N}
\newcommand{\mmmns}{M$^3$Ns}
\newcommand{\SVMISO}{SVM$^\textnormal{struct}$}
\newcommand{\SVMISOs}{SVM$^\textnormal{struct}$s}
\newcommand{\lab}[1]{$_{\textsf{#1}}$}
\newcommand{\jointlab}[2]{$_{\textsf{#1}}^{\textsf{#2}}$}
\newcommand{\w}{\vec w}
\newcommand{\slow}{$\sim$}
\newcommand{\unav}{$-$}
\newcommand{\cmax}{c_{\textrm{max}}}
\newcommand{\new}{{}^{\textrm{new}}}
\newcommand{\lCS}{\ell^{\textrm{CS}}_h(h')}
\begin{document}

\title{Search-based Structured Prediction}
\author{Hal Daum\'e III\inst{1} \and John Langford\inst{2} \and Daniel Marcu\inst{3}}
\institute{%
  School of Computing, University of Utah, Salt Lake City, UT 84112%
\and%
  Yahoo! Research Labs, New York, NY 10011%
\and%
  Information Sciences Institute, Marina del Rey, CA 90292%
}

\date{Received: ??? / Revised version: ???}

\maketitle

\begin{abstract}
We present \searn, an algorithm for integrating \textsc{sear}ch and
l\textsc{earn}ing to solve complex structured prediction problems such
as those that occur in natural language, speech, computational
biology, and vision.  \searn\ is a meta-algorithm that transforms
these complex problems into simple classification problems to which
any binary classifier may be applied.  Unlike current algorithms for
structured learning that require \emph{decomposition} of both the loss
function and the feature functions over the predicted structure,
\searn\ is able to learn prediction functions for \emph{any} loss
function and \emph{any} class of features.  Moreover, \searn\ comes
with a strong, natural theoretical guarantee: good performance on the
derived classification problems implies good performance on the
structured prediction problem.
\end{abstract}

\section{Introduction} \label{sec:intro}

Prediction is the task of learning a function $f$ that maps inputs $x$
in an input domain $\cX$ to outputs $y$ in an output domain $\cY$.
Standard algorithms---support vector machines, decision trees, neural
networks, etc.---focus on ``simple'' output domains such as $\cY = \{
-1, +1 \}$ (in the case of binary classification) or $\cY = \R$ (in
the case of univariate regression).

We are interested in problems for which elements $y \in \cY$ have
complex internal structure.  The simplest and best studied such output
domain is that of labeled sequences.  However, we are interested in
even more complex domains, such as the space of English sentences (for
instance in a machine translation application), the space of short
documents (perhaps in an automatic document summarization
application), or the space of possible assignments of elements in a
database (in an information extraction/data mining application).  The
structured complexity of features and loss functions in these problems
significantly exceeds that of sequence labeling problems.

From a high level, there are four dimensions along which structured
prediction algorithms vary: structure (varieties of structure for
which efficient learning is possible), loss (different loss functions
for which learning is possible), features (generality of feature
functions for which learning is possible) and data (ability of
algorithm to cope with imperfect data sources such as missing data,
etc.).  An in-depth discussion of alternative structured prediction
algorithms is given in Section~\ref{sec:alternatives}.  However, to
give a flavor, the popular conditional random field algorithm
\cite{lafferty01crf} is viewed along these dimensions as follows.
Structure: inference for a CRF is tractable for any graphical model
with bounded tree width; Loss: the CRF typically optimizes a log-loss
approximation to 0/1 loss over the entire structure; Features: any
feature of the input is possible but only output features that obey
the graphical model structure are allowed; Data: EM can cope with
hidden variables.

We prefer a structured prediction algorithm that is not limited to
models with bounded treewidth, is applicable to any loss function, can
handle arbitrary features and can cope with imperfect data.  Somewhat
surprisingly, \searn\ meets nearly all of these requirements by
transforming structured prediction problems into binary prediction
problems to which a vanilla binary classifier can be applied.  \searn\
comes with a strong theoretical guarantee: good binary classification
performance implies good structured prediction performance.  Simple
applications of \searn\ to standard structured prediction problems
yield tractable state-of-the-art performance.  Moreover, we can apply
\searn\ to more complex, non-standard structured prediction problems
and achieve excellent empirical performance.

This paper has the following outline:
\begin{enumerate}
\item Introduction.
\item Core Definitions.
\item The \searn\ Algorithm.
\item Theoretical Analysis.
\item A Comparison to Alternative Techniques.  
\item Experimental results.
\item Discussion.
\end{enumerate}

\section{Core Definitions}

In order to proceed, it is useful to formally define a structured
prediction problem in terms of a state space.

\begin{definition} \label{def:sp}
A \emph{structured prediction} problem $\cD$ is a cost-sensitive
classification problem where $\cY$ has structure: elements $y \in \cY$
decompose into variable-length vectors
$(y_1,y_2,\dots,y_T)$.\footnote{Treating $y$ as a vector is simply a
useful encoding; we are not interested only in sequence labeling
problems.}  $\cD$ is a distribution over inputs $x \in \cX$ and cost
vectors $\vec c$, where $\card{\vec c}$ is a variable in $2^T$.
\end{definition}

As a simple example, consider a parsing problem under F$_1$ loss.  In
this case, $\cD$ is a distribution over $(x,\vec c)$ where $x$ is an
input sequence and for all trees $y$ with $\card{x}$-many leaves,
$c_y$ is the F$_1$ loss of $y$ on the ``true'' output.

The goal of structured prediction is to find a function $h : \cX \fto
\cY$ that minimizes the loss given in Eq~\eqref{eq:sp-loss}.

\begin{equation} \label{eq:sp-loss}
L(\cD, h) = \Ep_{(x,\vec c) \sim \cD} \left\{ c_{h(x)} \right\}
\end{equation}

The algorithm we present is based on the view that a vector $y \in
\cY$ can be produced by predicting each component $(y_1, \dots, y_T)$
in turn, allowing for dependent predictions.  This is important for
coping with general loss functions.  For a data set $(x_1,c_1), \dots,
(x_N,c_N)$ of structured prediction examples, we write $T_n$ for the
length of the longest search path on example $n$, and $T_\textrm{max}
= \max_n T_n$.

\section{The \searn\ Algorithm} \label{sec:searn}

There are several vital ingredients in any application of \searn: a
seach space for decomposing the prediction problem; a cost sensitive
learning algorithm; labeled structured prediction training data; a
known loss function for the structured prediction problem; and a good
initial policy.  These aspects are described in more detail below.

\begin{description}
\item [A search space $\cS$.]  The choice of search space plays a role
similar to the choice of structured decomposition in other algorithms.
Final elements of the search space can always be referenced by a
sequence of choices $\hat{\vec{y}}$.  In simple applications of
\searn\ the search space is concrete.  For example, it might consist
of the parts of speech of each individual word in a sentence.  In
general, the search space can be abstract, and we show this can be
beneficial experimentally.  An abstract search space comes with an
(unlearned) function $f(\hat{\vec{y}})$ which turns any sequence of
predictions in the abstract search space into an output of the correct
form.  (For a concrete search space, $f$ is just the identity
function.  To minimize confusion, we will leave off $f$ in future
notation unless its presence is specifically important.)

\item [A cost sensitive learning algorithm $A$.]  The learning
algorithm returns a multiclass classifier $h(s)$ given cost sensitive
training data.  Here $s$ is a description of the location in the
search space.  A reduction of cost sensitive classification to binary
classification \cite{beygelzimer05reductions} reduces the requirement
to a binary learning algorithm.  \searn\ relies upon this learning
algorithm to form good generalizations.  Nothing else in the \searn\
algorithm attempts to achieve generalization or estimation.  The
performance of \searn\ is strongly dependent upon how capable the
learned classifier is.  We call the learned classifier a {\em policy}
because it is used multiple times on inputs which it effects, just as
in reinforcement learning.

\item [Labeled structured prediction training data.]  \searn\ digests
the labeled training data for the structured prediciton problem into
cost-sensitive training data which is fed into the cost-sensitive
learning algorithm.\footnote{A $k$-class cost-sensitive example is
given by an input $X$ and a vector of costs $\vec c \in (\R^+)^k$.
Each class $i$ has an associated cost $c_i$ and the goal is a function
$h : X \mapsto i$ that minimizes the expected value of $c_i$.  See
\cite{beygelzimer05reductions}.}

\item [A known loss function.]  A loss function
  $L(\vec{y},f(\hat{\vec{y}}))$ must be known and be computable for
  any sequence of predictions.

\item [A good initial policy.]  This policy should achieve low loss
  when applied to the training data.  This can (but need not always)
  be defined using a search algorithm.
\end{description}

\subsection{\searn\ at Test Time}

\searn\ at test time is a very simple algorithm.  It uses the policy
returned by the learning algorithm to construct a sequence of
decisions $\hat{\vec{y}}$ and makes a final prediction
$f(\hat{\vec{y}})$.  First, one uses the learned policy to compute
$y_0$ on the basis of just the input $x$.  One then computes $y_1$ on
the basis of $x$ and $y_0$, followed by predicting $y_2$ on the basis
of $x$, $y_0$ and $y_1$, etc.  Finally, one predicts $y_T$ on the
basis of the input $x$ and \emph{all} previous decisions.

\subsection{\searn\ at Train Time} \label{sec:searn:training}

\searn\ operates in an iterative fashion.  At each iteration it uses a
known policy to create new cost-sensitive classification examples.
These examples are essentially the classification decisions that a
policy would need to get right in order to perform search well.  These
are used to learn a new classifier, which is interpreted as a new
policy.  This new policy is interpolated with the old policy and the
process repeats.

\subsubsection{Initial Policy}  \label{sec:searn:training:op}

\searn\ relies on a good initial policy on the \emph{training data}.
This policy can take full advantage of the training data labels.  The
initial policy needs to be efficiently computable for \searn\ to be
efficient.  The implications of this assumption are discussed in
detail in Section~\ref{sec:searn:policy:op}, but it is strictly weaker
than assumptions made by other structured prediction techniques.  The
initial policy we use is a policy that, for a given state predicts the
best action to take with respect to the labels:

\begin{definition}[Initial Policy] \label{def:op}
For an input $x$ and a cost vector $\vec c$ as in Def~\ref{def:sp},
and a state $s = x \times (y_1, \dots, y_t)$ in the search space, the
\emph{initial policy} $\pi(s,\vec c)$ is $\arg\min_{y_{t+1}}
\min_{y_{t+2}, \dots, y_T} c_{\langle y_1, \dots, y_T \rangle}$.  That
is, $\pi$ chooses the action (i.e., value for $y_{t+1}$) that
minimizes the corresponding cost, assuming that all \emph{future}
decisions are also made optimally.
\end{definition}

This choice of initial policy is optimal when the correct output is a
deterministic function of the input features (effectively in a
noise-free environment).

\subsubsection{Cost-sensitive Examples}  \label{sec:searn:training:cost}

In the training phase, \searn\ uses a given policy $h$ (initialized to
the the initial policy $\pi$) to construct cost-sensitive multiclass
classification examples from which a new classifier is learned.  These
classification examples are created by \emph{running} the given policy
$h$ over the training data.  This generates one path per structured
training example.  \searn\ creates a single cost-sensitive example for
each state on each path.  The classes associated with each example are
the available actions (or next states).  The only difficulty lies in
specifying the costs.

The cost associated with taking an action that leads to state $s$ 
is the \emph{regret} associated with this action, given our current
policy.  For each state $s$ and each action $a$, we take action $a$
and then execute the policy to gain a full sequence of predictions
$\hat{\vec{y}}$ for which we can compute a loss
$\vec{c}_{\hat{\vec{y}}}$.  Of all the possible actions, one, $a'$,
has the minimum expected loss.  The \emph{cost}
$\ell_h(\vec{c},s,a)$ for an action $a$ in state $s$ is the
difference in loss between taking action $a$ and taking the action
$a'$; see Eq~\eqref{eq:loss}.

\begin{equation} \label{eq:loss}
\ell_h (\vec{c},s,a) = \Ep_{\hat{\vec{y}} \sim (s,a,h)}
{c}_{\hat{\vec{y}}} - \min_{a'} \Ep
_{\hat{\vec{y}} \sim (s,a',h)} {c}_{\hat{\vec{y}}}
\end{equation}

One complication arises because the policy used may be stochastic.
This can occur even when the base classifier learned is deterministic
due to stochastic interpolation within \searn.  There are (at least) three
possible ways to deal with randomness.

\begin{enumerate}
\item Monte-Carlo sampling: one draws many paths according to $h$
  beginning at $s'$ and average over the costs.

\item Single Monte-Carlo sampling: draw a single path and use the
  corresponding cost, with tied randomization as per Pegasus
  \cite{ng00pegasus}.

\item Approximation: it is often possible to efficiently compute the
  loss associated with following the initial policy from a given
  state; when $h$ is sufficiently good, this may serve as a useful and
  fast approximation.  (This is also the approach described by
  \newcite{langford05reinforcement}.)
\end{enumerate}

The quality of the learned solution depends on the quality of the
approximation of the loss.  Obtaining Monte-Carlo samples is likely
the best solution, but in many cases the approximation is sufficient.
An empirical comparison of these options is performed in
\cite{daume06thesis}.  Here it is observed that for easy problems (one
for which low loss is possible), the approximation performs
approximately as well as the alternatives.  Moreover, typically the
approximately outperforms the single sample approach, likely due to
the noise induced by following a single sample.

\subsubsection{Algorithm}  \label{sec:searn:training:algorithm}

The \searn\ algorithm is shown in Figure~\ref{fig:algorithm}.  As
input, the algorithm takes a structured learning data set, an initial
policy $\pi$ and a multiclass cost sensitive learner $A$.  \searn\
operates iteratively, maintaining a current policy hypothesis $h$ at
each iteration.  This hypothesis is initialized to the initial policy
(step 1).

\begin{figure}[t]
\center\framebox{
\hspace{-2mm}\begin{minipage}[t]{11.5cm}
~{\bf Algorithm} \searn($S^{\textrm{SP}}$, $\pi$, $A$)
\begin{algorithmic}[1]
\STATE Initialize policy $h \leftarrow \pi$
\WHILE{ $h$ has a significant dependence on $\pi$ }
\STATE Initialize the set of cost-sensitive examples $S \leftarrow \emptyset$
\FOR{$(x,y) \in S^{\textrm{SP}}$}
\STATE Compute predictions under the current policy $\hat{\vec{y}} \sim x,h$ 
\FOR{$t = 1 \dots T_x$}
\STATE Compute features $\Phi = \Phi(s_t)$ for state $s_t = (x,y_1,...,y_t)$
\STATE Initialize a cost vector $\vec c = \langle \rangle$
\FOR{each possible action $a$}
\STATE Let the cost $\ell_a$ for example $x,\vec{c}$ at state $s$ be $\ell_h (\vec{c},s,a)$
\ENDFOR
\STATE Add cost-sensitive example $(\Phi,\vec{\ell})$ to $S$
\ENDFOR
\ENDFOR
\STATE Learn a classifier on $S$: $h' \leftarrow A(S)$
\STATE Interpolate: $h \leftarrow \be h' + (1-\be) h$
\ENDWHILE
\STATE {\bf return} $h_{\textrm{last}}$ without $\pi$
\end{algorithmic}
\end{minipage}
}
\caption{Complete \searn\ Algorithm}
\label{fig:algorithm}
\end{figure}

The algorithm then loops for a number of iterations.  In each
iteration, it creates a (multi-)set of cost-sensitive examples, $S$.
These are created by looping over each structured example (step 4).
For each example (step 5), the \emph{current policy} $h$ is used to
produce a full output, represented as a sequence of predictions
$y_1,...,y_{T_n}$.  From this, states are derived and used to create a
single cost-sensitive example (steps 6-14) at each timestep.

The first task in creating a cost-sensitive example is to compute the
associated feature vector, performed in step 7.  This feature vector
is based on the current state $s_t$ which includes the features $x$
(the creation of the feature vectors is discussed in more detail in
Section~\ref{sec:searn:training:features}).  The cost vector contains
one entry for every possible action $a$ that can be executed from
state $s_t$.  For each action $a$, we compute the expected loss
associated with the state $s_t \oplus a$: the state arrived at
assuming we take action $a$ (step 10).  

\searn\ creates a large set of cost-sensitive examples $S$.  These are
fed into any cost-sensitive classification algorithm, $A$, to produce
a new classifier $h'$ (step 15).  In step 16, \searn\ \emph{combines}
the newly learned classifier $h'$ with the current classifier $h$ to
produce a new classifier.  This combination is performed through
stochastic interpolation with interpolation parameter $\be$ (see
Section~\ref{sec:searn:analysis} for details).  The meaning of
stochastic interpolation here is: ``every time $h$ is evaluated, a new
random number is drawn.  If the random number is less than $\be$ then
$h'$ is used and otherwise the old $h$ is used.''  \searn\ returns the
final policy with $\pi$ removed (step 18) and the stochastic
interpolation renormalized.

\subsection{Feature Computations} \label{sec:searn:training:features}

In step 7 of the \searn\ algorithm (Figure~\ref{fig:algorithm}), one
is required to compute a feature vector $\Phi$ on the basis of the
give state $s_t$.  In theory, this step is arbitrary.  However, the
performance of the underlying classification algorithm (and hence the
induced structured prediction algorithm) hinges on a good choice for
these features.  The feature vector $\Phi(s_t)$ may depend on
\emph{any} aspect of the input $x$ and any past decision.  In
particular, there is no limitation to a ``Markov'' dependence on
previous decisions.

For concreteness, consider the part-of-speech tagging task: for each
word in a sentence, we must assign a single part of speech (eg.,
\textsf{Det}, \textsf{Noun}, \textsf{Verb}, etc.).  Given a state $s_t
= \langle x, y_1, \dots y_t \rangle$, one might compute a sparse
feature vector $\Phi(s_t)$ with zeros everywhere except at positions
corresponding to ``interesting'' aspects of the input.  For instance,
a feature corresponding to the identity of the $t+1$st word in the
sentence would likely be very important (since this is the word to be
tagged).  Furthermore, a feature corresponding to the value $y_t$
would likely be important, since we believe that subsequent tags are
not independent of previous tags.  These features would serve as the
input to the cost-sensitive learning algorithm, which would attempt to
predict the correct label for the $t+1$st word.  This usually
corresponds to learning a single weight vector for each class (in a
one-versus-all setting) or to learning a single weight vector for
each pair of classes (for all-pairs).

\subsection{Policies} \label{sec:searn:policy}

\searn\ functions in terms of policies, a notion borrowed from the
field of reinforcement learning.  This section discusses the nature of
the initial policy assumption and the connections to reinforcement
learning.

\subsubsection{Computability of the Initial Policy} \label{sec:searn:policy:op}

\searn\ relies upon the ability to start with a good initial policy
$\pi$, defined formally in Definition~\ref{def:op}.  For many simple
problems under standard loss functions, it is straightforward to
compute a good policy $\pi$ in constant time.  For instance, consider
the sequence labeling problem (discussed further in
Section~\ref{sec:sequence}).  A standard loss function used in this
task is Hamming loss: of all possible positions, how many does our
model predict incorrectly.  If one performs search left-to-right,
labeling one element at a time (i.e., each element of the $y$ vector
corresponds exactly to one label), then $\pi$ is trivial to compute.
Given the correct label sequence, $\pi$ simply chooses at position $i$
the correct label at position $i$.  However, \searn\ is not limited to
simple Hamming loss.  A more complex loss function often considered
for the sequence segmentation task is F-score over (correctly labeled)
segments.  As discussed in
Section~\ref{sec:sequence:problems:chunking}, it is just as easy to
compute a good initial policy for this loss function.  This is not
possible in many other frameworks, due to the non-additivity of
F-score.  This is independent of the features.

This result---that \searn\ can learn under strictly more complex
structures and loss functions than other techniques---is not limited
to sequence labeling, as demonstrated below in
Theorem~\ref{thm:powerful}.  In order to prove this, we need to
formalize what we consider as ``other techniques.''  We use the
max-margin Markov network (\mmmn) formalism \cite{taskar05mmmn} for
comparison, since this currently appears to be the most powerful
generic framework.  In particular, learning in \mmmns\ is often
tractable for problems that would be \#P-hard for conditional random
fields.  The \mmmn\ has several components, one of which is the
ability to compute a loss-augmented minimization \cite{taskar05mmmn}.
This requirement states that Eq~\eqref{eq:mmmn:lossaug} is computable
for any input $x$, output set $\cY_x$, true output $y$ and weight
vector $\vec w$.

\begin{equation} \label{eq:mmmn:lossaug}
\textit{opt}(\cY_x, y, \vec w) =
  \arg\max_{\hat y \in \cY_x} 
    \vec w \T \Phi(x,\hat y) -
    l(y,\hat y)
\end{equation}

In Eq~\eqref{eq:mmmn:lossaug}, $\Phi(\cdot)$ produces a vector of features,
$\vec w$ is a weight vector and $l(y,\hat y)$ is the loss for
prediction $\hat y$ when the correct output is $y$.

\begin{theorem} \label{thm:powerful}
Suppose Eq~\eqref{eq:mmmn:lossaug} is computable in time $T(x)$; then
the optimal policy is computable in time $\cO(T(x))$.  Further, there
exist problems for which the optimal policy is computable in constant
time and for which Eq~\eqref{eq:mmmn:lossaug} is an NP-hard computation.
\end{theorem}

\begin{proof}[sketch]
For the first part, we use a vector encoding of $y$ that maintains the
decomposition over the regions used by the \mmmn.  Given a prefix
$y_1, \dots, y_t$, solve $\textit{opt}$ on the future choices (i.e.,
remove the part of the structure corresponding to the first $t$
outputs), which gives us an optimal policy.

For the second part, we simply make $\Phi$ complex: for instance,
include long-range dependencies in sequence labeling.  As the Markov
order $k$ increases, the complexity of Viterbi decoding grows as
$l^k$, where $l$ is the number of labels.  In the limit as the Markov
order approaches the length of the longest sequence, $T_\textrm{max}$,
the computation for the minimal cost path (with or without the added
complexity of augmenting the cost with the loss) becomes NP-hard.
Despite this intractability for Viterbi decoding, \searn\ can be
applied to the identical problem with the exact same feature set, and
inference becomes tractable (precisely because \searn\ never applies a
Viterbi algorithm).  The complexity of one iteration of \searn\ for
this problem is identical to the case when a Markov assumption is
made: it is $\order{Tl^b}$, where $T$ is the length of the sequence,
and $b$ is the beam size.
\end{proof}

\subsubsection{Search-based Policies} \label{sec:searn:policy:search-based}

The \searn\ algorithm and the theory to be presented in
Section~\ref{sec:searn:analysis} do not require that the initial
policy be optimal.  \searn\ can train against \emph{any} policy.  One
artifact of this observation is that we can use \emph{search} to
create the initial policy.

At any step of \searn, we need to be able to compute the best next
action.  That is, given a node in the search space, and the cost
vector $\vec c$, we need to compute the best step to take.  This is
\emph{exactly} the standard search problem: given a node in a search
space, we find the shortest path to a goal.  By taking the first step
along this shortest path, we obtain a good initial policy (assuming
this shortest path is, indeed, shortest).  This means that when
\searn\ asks for the best next step, one can execute \emph{any}
standard search algorithm to compute this, for cases where a good
initial policy is not available analytically.

Given this observation, the requirements of \searn\ are reduced: instead
of requiring a good initial policy, we simply require that one can
perform efficient approximate search.  

\subsubsection{Beyond Greedy Search} \label{sec:searn:policy:beyond}

We have presented \searn\ as an algorithm that mimics the operations
of a \emph{greedy} search algorithm.  Real-world experience has shown
that often greedy search is insufficient and more complex search
algorithms are required.  This observation is consistent with the
standard view of search (trying to find a shortest path), but nebulous
when considered in the context of \searn.  Nevertheless, it is often
desirable to allow a model to trade past decisions off future
decisions, and this is precisely the purpose of instituting more
complex search algorithms.

It turns out that any (non-greedy) search algorithm operating in a
search space $\cS$ can be equivalently viewed as a greedy search
algorithm operating in an abstract space $\cS^*$ (where the structure
of the abstract space is dependent on the original search algorithm).
In a general search algorithm \cite{russell95aibook}, one maintains a
\emph{queue} of active states and expands a single state in each
search step.  After expansion, each resulting child state is
enqueued.  The ordering (and, perhaps, maximal size) of the queue is
determined by the specific search algorithm.

In order to simulate this more complex algorithm as greedy search, we
construct the abstract space $\cS^*$ as follows.  Each node $s \in
\cS^*$ represents a state of the queue.  A transition exists between
$s$ and $s'$ in $\cS^*$ exactly when a particular expansion of an
$\cS$-node in the $s$-queue results in the queue becoming $s'$.
Finally, for each goal state $g \in \cS$, we augment $\cS^*$ with a
single unique goal state $g^*$.  We insert transitions from $s \in
\cS^*$ to $g^*$ exactly when $g^* \in s$.  Thus, in order to complete
the search process, a goal node must be in the queue and the search
algorithm must select this single node.

In general, \searn\ makes no assumptions about how the search process
is structured.  A different search process leads to a different bias
in the learning algorithm.  It is up to the designer to construct a
search process so that (a) a good bias is exhibited and (b) computing
a good initial policy is easy.  For instance, for some combinatorial
problems such as matchings or tours, it is known that left-to-right
beam search tends to perform poorly.  For these problems, a local
hill-climbing search is likely to be more effective since we expect it
to render the underlying classification problem simpler.

\section{Theoretical Analysis} \label{sec:searn:analysis}

\searn\ functions by slowly moving \emph{away} from the initial policy
(which is available only for the training data) \emph{toward} a fully
learned policy.  Each iteration of \searn\ \emph{degrades} the current
policy.  The main theorem states that the learned policy is not much
worse than the starting (optimal) policy plus a term related to the
average cost sensitive loss of the learned classifiers and another
term related to the maximum cost sensitive loss.  To simplify
notation, we write $T$ for $T_\textrm{max}$.

It is important in the analysis to refer explicitly to the error of
the classifiers learned during \searn\ process.  Let $\searn(\cD,h)$
denote the distribution over classification problems generated by
running \searn\ with policy $h$ on distribution $\cD$.  Also let
$\lCS$ denote the loss of classifier $h'$ on the distribution
$\searn(\cD,h)$.  Let the average cost sensitive loss over $I$
iterations be:

\begin{equation} \label{eq:convergence}
\ell_{\textrm{avg}} =
  \frac 1 I
    \sum_{i=1}^I
      \ell^{\textrm{cs}}_{h_i}(h_i')
\end{equation}
where $h_i$ is the $i$th policy and $h_i'$ is the classifier learned
on the $i$th iteration.

\begin{theorem} \label{cor}
For all $\cD$ with $\cmax = \Ep_{(x,\vec c) \sim \cD} \max_{\vec y}
c_{\vec y}$ (with $(x,\vec c)$ as in Def~\ref{def:sp}), for all
learned cost sensitive classifiers $h'$, \searn\ with $\be = 1/T^3$
and $2T^3 \ln T$ iterations, outputs a learned policy with loss
bounded by:

\begin{equation*}
L(\cD,h_{\textrm{last}}) \leq 
L(\cD,\pi) 
+ 2 T \ell_{\textrm{avg}} \ln T 
+ (1 + \ln T) \cmax / T
\end{equation*}
\end{theorem}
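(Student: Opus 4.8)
The plan is to prove the bound in three stages: a one-iteration degradation lemma, a telescoping sum over the $2T^3\ln T$ iterations, and a final accounting for the removal of $\pi$ in step~18 of the algorithm; the parameter choices $\be = 1/T^3$ and $I = 2T^3\ln T$ are substituted only at the very end. Writing $L_i = L(\cD,h_i)$, the central claim I would establish first is the per-iteration inequality
\begin{equation*}
L_{i+1} \leq L_i + \be T\, \ell^{\textrm{cs}}_{h_i}(h_i') + \frac{1}{2}\be^2 T^2 \cmax ,
\end{equation*}
which says that a single interpolation step $h_{i+1} = \be h_i' + (1-\be)h_i$ costs at most a first-order term governed by the cost-sensitive loss of the freshly learned classifier, plus a second-order correction controlled by $\cmax$.

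To prove this lemma I would decompose a length-$T$ trajectory drawn from $h_{i+1}$ according to the number $k$ of steps at which the interpolated policy flips to $h_i'$ rather than $h_i$; since each step independently selects $h_i'$ with probability $\be$, the event of exactly $k$ deviations has probability ${T \choose k}\be^k(1-\be)^{T-k}$, and these probabilities sum to one, so the ``$L_i$ part'' of the conditional loss factors out cleanly. The $k=0$ term follows $h_i$ throughout and contributes precisely $L_i$. The $k=1$ term is the crucial one: a single deviation occurs at some position $j$, and because the first $j-1$ steps still follow $h_i$, the state at which the deviation happens is drawn exactly from the state distribution $\searn(\cD,h_i)$ on which $\ell^{\textrm{cs}}_{h_i}(h_i')$ is measured. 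The expected excess cost of that one deviation—take the action chosen by $h_i'$, then roll out $h_i$ to completion—is by the definition in Eq~\eqref{eq:loss} exactly the cost-sensitive regret, so summing over the $T$ possible positions yields the $\be T\, \ell^{\textrm{cs}}_{h_i}(h_i')$ term. The tail $k\geq 2$ is bounded crudely: its total probability is at most ${T \choose 2}\be^2 \leq \frac{1}{2}\be^2 T^2$, and each such trajectory can cost at most $\cmax$ in excess, giving the quadratic correction. The \textbf{main obstacle} is this first-order analysis: one must verify that conditioning on ``exactly one deviation, at position $j$'' leaves the $h_i$-induced state distribution undistorted (so that the averaged single-deviation regret equals $\ell^{\textrm{cs}}_{h_i}(h_i')$ exactly, rather than merely up to lower-order state-visitation discrepancies), which is precisely where a naive policy-iteration argument would accrue uncontrolled error.

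Given the lemma I would sum it from $i=1$ to $I$. Since $L_1 = L(\cD,\pi)$ and $\sum_i \ell^{\textrm{cs}}_{h_i}(h_i') = I\,\ell_{\textrm{avg}}$ by Eq~\eqref{eq:convergence}, telescoping gives
\begin{equation*}
L_{I+1} \leq L(\cD,\pi) + \be T I\, \ell_{\textrm{avg}} + \frac{1}{2} I \be^2 T^2 \cmax .
\end{equation*}
Substituting $\be = 1/T^3$ and $I = 2T^3\ln T$ makes $\be T I = 2T\ln T$ and $\frac{1}{2} I \be^2 T^2 = (\ln T)/T$, producing the first two reported terms together with the $(\ln T)\cmax/T$ piece. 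The remaining $\cmax/T$ summand comes from step~18: the returned policy deletes $\pi$ and renormalizes, and the coefficient on $\pi$ after $I$ interpolations is $(1-\be)^I \leq e^{-\be I} = e^{-2\ln T} = 1/T^2$. Hence the probability that $\pi$ is invoked at any of the $T$ steps of a trajectory is at most $T/T^2 = 1/T$, so deleting it perturbs the loss by at most $\cmax/T$. Adding this to the telescoped bound combines the two $\cmax$ contributions into exactly $(1+\ln T)\cmax/T$ and completes the proof.
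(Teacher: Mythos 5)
Your proposal is correct and follows essentially the same route as the paper: the per-iteration inequality you state is exactly the paper's Policy Degradation lemma (proved there, as here, by conditioning on whether the newly learned classifier is invoked $0$, $1$, or $\geq 2$ times along a length-$T$ trajectory, with the $k=1$ case tied to $\ell^{\textrm{cs}}_{h_i}(h_i')$ via the state distribution $\searn(\cD,h_i)$ and the $k\geq2$ tail bounded by $\frac12\be^2T^2\cmax$), the telescoping step is the paper's Iteration lemma with $C=\be I=2\ln T$, and your $(1-\be)^I\leq e^{-\be I}$ accounting for deleting $\pi$ matches the paper's $T(1-\be)^{C/\be}\leq Te^{-C}=1/T$ failure term. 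The only cosmetic differences are that you bound the tail by a union bound over pairs rather than the paper's binomial expansion under $\be<1/T$, and that the single-deviation excess is really \emph{at most} (not exactly) the cost-sensitive regret, since it is the regret of $h_i'$ minus the nonnegative regret of $h_i$'s own action---an inequality in the direction the proof needs.
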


The dependence on $T$ in the second term is due to the cost sensitive
loss being an average over $T$ timesteps while the total loss is a
sum.  The $\ln T$ factor is not essential and can be removed using
other approaches \cite{PSDP} \cite{langford05reinforcement}.  The
advantage of the theorem here is that it applies to an algorithm that
naturally copes with variable length $T$ and yields a smaller amount
of computation in practice.

The choices of $\beta$ and the number of iterations are pessimistic in
practice.  Empirically, we use a development set to perform a line
search minimization to find per-iteration values for $\beta$ and to
decide when to stop iterating.  The analytical choice of $\beta$ is
made to ensure that the probability that the newly created policy only
makes \emph{one} different choice from the previous policy for any
given example is sufficiently low.  The choice of $\beta$ assumes the
worst: the newly learned classifier \emph{always} disagrees with the
previous policy.  In practice, this rarely happens.  After the first
iteration, the learned policy is typically quite good and only rarely
differs from the initial policy.  So choosing such a small value for
$\beta$ is unneccesary: even with a higher value, the current
classifier often agrees with the previous policy.

The proof rests on the following lemmae.

\begin{lemma}[Policy Degradation] \label{lemma:degradation}
Given a policy $h$ with loss $L(\cD,h)$, apply a single iteration of
\searn\ to learn a classifier $h'$ with cost-sensitive loss $\lCS$.
Create a new policy $h\new$ by interpolation with parameter $\beta \in
(0,1/T)$.  Then, for all $\cD$, with $\cmax = \Ep_{(x,\vec c) \sim
\cD} \max_i c_i$ (with $(x,\vec c)$ as in Def~\ref{def:sp}):

\begin{equation} \label{eq:degradation}
L(\cD,h\new) \leq L(\cD,h) + T \be \lCS + \frac 1 2 \be^2 T^2 \cmax
\end{equation}
\end{lemma}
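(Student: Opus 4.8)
The plan is to analyze $L(\cD,h\new)$ through the stochastic interpolation that defines $h\new$. Running $h\new$ to produce a length-$T$ trajectory amounts to flipping, at each of the $T$ states, an independent $\be$-biased coin and consulting the freshly learned classifier $h'$ on heads and the old policy $h$ on tails. Hence the number $K$ of states at which $h'$ is used is distributed as $\textrm{Binomial}(T,\be)$, and I would condition on $K$,
\begin{equation*}
L(\cD,h\new) = \sum_{k=0}^T \Pr(K=k)\, L_k , \qquad L_k = \Ep\big[\, c_{\hat{\vec{y}}} \mid K=k \,\big] ,
\end{equation*}
and bound the regimes $k=0$, $k=1$, and $k\ge 2$ separately. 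When $k=0$ every decision is made by $h$, so the trajectory distribution is exactly that of $h$ and $L_0 = L(\cD,h)$.

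The heart of the argument, and the step I expect to be the main obstacle, is the $k=1$ term, which is what couples the structured loss to the classification loss $\lCS$. Conditioned on $K=1$ the single $h'$-step falls at a position $t$ that is uniform on $\{1,\dots,T\}$, and \emph{every} other step (in particular the whole prefix $1,\dots,t-1$) is taken by $h$; therefore the state reached at step $t$ is distributed exactly as a state visited by $h$, which is precisely the distribution $\searn(\cD,h)$ on which $\lCS$ is measured. Writing $Q_h(s,a) = \Ep_{\hat{\vec{y}} \sim (s,a,h)} c_{\hat{\vec{y}}}$ for the cost of taking $a$ at $s$ and then rolling out with $h$, and $V_h(s) = Q_h(s,h(s))$, the conditional loss is $L_1 = \frac 1 T \sum_t \Ep_{s_t}\big[ Q_h(s_t,h'(s_t)) \big]$. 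Since $\Ep_{s_t}[V_h(s_t)] = L(\cD,h)$ for every $t$ (the cost depends only on the final prediction, and the run is pure $h$ apart from the one step), I would add and subtract $V_h(s_t)$ to get $L_1 = L(\cD,h) + \frac 1 T \sum_t \Ep_{s_t}\big[ Q_h(s_t,h'(s_t)) - Q_h(s_t,h(s_t)) \big]$. Each per-step gap is at most the regret $\ell_h(\vec c,s_t,h'(s_t)) = Q_h(s_t,h'(s_t)) - \min_{a'} Q_h(s_t,a')$, because $h(s_t)$ need not be the minimizer, and the position-average of these expected regrets is by construction exactly $\lCS$; hence $L_1 \le L(\cD,h) + \lCS$.

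For $k\ge 2$ I would use the crude bound $L_k \le \cmax$ and control its probability by a union bound over the $\binom{T}{2}$ pairs of positions: $\{K\ge 2\}$ is contained in the union over pairs $(i,j)$ of the event ``both $i$ and $j$ used $h'$,'' each of probability $\be^2$, so $\Pr(K\ge 2) \le \binom{T}{2}\be^2 \le \tfrac12 T^2 \be^2$. Collecting the three regimes, and using $\Pr(K=0)+\Pr(K=1)\le 1$ together with $L(\cD,h)\ge 0$ to fold both ``base'' contributions into a single $L(\cD,h)$, yields
\begin{equation*}
L(\cD,h\new) \le L(\cD,h) + \Pr(K=1)\,\lCS + \Pr(K\ge 2)\,\cmax \le L(\cD,h) + T\be\,\lCS + \tfrac12 \be^2 T^2 \cmax ,
\end{equation*}
using $\Pr(K=1) = T\be(1-\be)^{T-1} \le T\be$ in the last inequality. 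This is exactly Eq.~\eqref{eq:degradation}; the restriction $\be \in (0,1/T)$ is needed for none of the individual inequalities but is simply the regime in which the second-order tail term is genuinely lower order than the first.
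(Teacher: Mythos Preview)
Your proof is correct and follows essentially the same three-case decomposition ($K=0$, $K=1$, $K\ge 2$) as the paper's proof, with the same bound $L_1 \le L(\cD,h)+\lCS$ for the single-deviation case. The only minor difference is that you control $\Pr(K\ge 2)$ by a union bound over pairs (giving $\binom{T}{2}\be^2$ directly, with no constraint on $\be$), whereas the paper bounds it via the alternating tail of the binomial expansion and invokes $\be<1/T$ to ensure the terms decrease; your route is slightly cleaner and explains your closing remark that the restriction $\be\in(0,1/T)$ is not actually used.
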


\begin{proof}
The proof largely follows the proofs of Lem 6.1 and Theorem 4.1 for
conservative policy iteration \cite{kakade02cpi}.  The three
differences are that (1) we must deal with the finite horizon case;
(2) we move \emph{away from} rather than \emph{toward} a good policy;
and (3) we expand to higher order.

The proof works by separating three cases depending on whether
$h^\textrm{CS}$ or $h$ is called in the process of running $h\new$.
The easiest case is when $h^\textrm{CS}$ is never called.  The second
case is when it is called exactly once.  The final case is when it is
called more than once.  Denote these three events by $c=0$, $c=1$ and
$c\geq2$, respectively.

\begin{align}
L(\cD,h\new)
  = &  Pr(c=0)    L(\cD, h\new \| c=0 ) \nonumber\\
    &+ Pr(c=1)    L(\cD, h\new \| c=1 ) \nonumber\\
    &+ Pr(c\geq2) L(\cD, h\new \| c\geq 2 ) \\
&\nonumber\\
\leq&  (1-\be)^T L(\cD, h) 
    + T\be(1-\be)^{T-1} \Big[ L(\cD,h) + \lCS \Big] \\
    &+ \Big[1 - (1-\be)^T - T\be(1-\be)^{T-1}\Big] \cmax \nonumber\\
&\nonumber\\
  = & L(\cD, h) + T\be(1-\be)^{T-1} \lCS \\
  &+ \Big[ 1- (1-\be)^T -T \be ( 1 - \be) ^{T-1} \Big](\cmax - L(\cD,h))
&\nonumber\\
&\nonumber\\
\leq&  L(\cD, h) + T\be \lCS \nonumber\\
    & + \Big[1 - (1-\be)^T - T\be(1-\be)^{T-1}\Big] \cmax \\
&\nonumber\\
   =&  L(\cD, h) + T\be \lCS 
     + \left( \sum_{i=2}^T (-1)^i \be^i {T \choose i} \right) \cmax \\
&\nonumber\\
\leq&  L(\cD, h) + T\be\lCS   
     + \frac 1 2 T^2 \be^2 \cmax
\end{align}

The first inequality writes out the precise probability of the events
in terms of $\be$ and bounds the loss of the last event ($c > 2$) by
$\cmax$.  The second inequality is algebraic.  The third uses
the assumption that $\be < 1/T$.  
\end{proof}

\noindent
This lemma states that applying a single iteration of \searn\ does not
cause the structured prediction loss of the learned hypothesis to
degrade too much.  In particular, up to a first order approximation,
the loss increases proportional to the loss of the learned classifier.
This observation can be iterated to yield the following lemma:

\begin{lemma}[Iteration] \label{thm:convergence}
For all $\cD$, for all learned $h'$, after $C/\be$ iterations of
\searn\ beginning with a policy $\pi$ with loss $L(\cD,\pi)$, and
average learned losses as Eq~\eqref{eq:convergence}, the loss of the
final learned policy $h$ (without the optimal policy component) is
bounded by Eq~\eqref{eq:bound}.

\begin{equation} \label{eq:bound}
L(\cD,h_{\textrm{last}}) \leq 
L(\cD,\pi) + C T \ell_{\textrm{avg}} + \cmax \left(\frac 1 2 C T^2 \be + T\exp[-C]\right)
\end{equation}
\end{lemma}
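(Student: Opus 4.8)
The plan is to iterate the Policy Degradation lemma (Lemma~\ref{lemma:degradation}) across the $I = C/\be$ rounds and then separately account for stripping the $\pi$-component out of the returned policy.

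First I would telescope. Applying Lemma~\ref{lemma:degradation} at the $i$th round, with current policy $h_i$ and learned classifier $h_i'$ (its hypothesis $\be\in(0,1/T)$ is in force), bounds the loss of the resulting interpolated policy by $L(\cD,h_i) + T\be\,\ell^{\textrm{cs}}_{h_i}(h_i') + \frac12\be^2 T^2\cmax$. Chaining these inequalities across $i=1,\dots,I$ telescopes the running losses, so that, writing $\bar h$ for the full interpolated policy after all $I$ rounds (still containing $\pi$), the first-order terms sum to $T\be\sum_{i=1}^I \ell^{\textrm{cs}}_{h_i}(h_i') = T\be\cdot I\,\ell_{\textrm{avg}} = CT\ell_{\textrm{avg}}$ by the definition of $\ell_{\textrm{avg}}$ in Eq~\eqref{eq:convergence}, and the $I$ identical second-order terms sum to $I\cdot\frac12\be^2 T^2\cmax = \frac12 C\be T^2\cmax$ (using $I=C/\be$). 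Thus $L(\cD,\bar h) \le L(\cD,\pi) + CT\ell_{\textrm{avg}} + \frac12 C\be T^2\cmax$, which is the first two summands and the $\frac12 CT^2\be$ piece of Eq~\eqref{eq:bound}; this part is routine bookkeeping.

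The substantive step is passing from $\bar h$ to the returned $h_{\textrm{last}}$, which deletes the $\pi$-component and renormalizes the interpolation. Unrolling the interpolation shows $\bar h$ invokes $\pi$ at each decision \emph{independently} with probability $q = (1-\be)^I = (1-\be)^{C/\be} \le \exp[-C]$ (using the standard bound $(1-\be)^{1/\be}\le e^{-1}$), and that conditioning on never invoking $\pi$ along a path recovers exactly the trajectory distribution of $h_{\textrm{last}}$. I would fix an example $(x,\vec c)$ of path length $T_x\le T$ and split the expected cost of $\bar h$ on it according to whether $\pi$ is called zero times or at least once: the zero-times event has probability $(1-q)^{T_x}$ and conditional cost equal to that of $h_{\textrm{last}}$ on this example, while the complementary event has some nonnegative conditional cost $M$. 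Solving the resulting identity for the cost of $h_{\textrm{last}}$ and using that its conditional cost is at most $\max_i c_i$ while $M\ge 0$, the per-example gap is at most $[1-(1-q)^{T_x}]\max_i c_i \le T_x q\max_i c_i \le Tq\max_i c_i$ by Bernoulli's inequality. Averaging over $\cD$ and invoking $\cmax = \Ep_{(x,\vec c)}\max_i c_i$ gives $L(\cD,h_{\textrm{last}}) \le L(\cD,\bar h) + T\exp[-C]\,\cmax$.

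Adding the two bounds produces Eq~\eqref{eq:bound} exactly. I expect the main obstacle to be this second step: one must correctly model what ``remove $\pi$ and renormalize'' does to the induced trajectory distribution --- in particular that $\pi$ is drawn independently at each of the (at most $T$) steps with probability $q$, so that conditioning on the event of never drawing it reproduces $h_{\textrm{last}}$ --- and then convert the multiplicative $(1-q)^{T_x}$ reweighting into the clean additive $T\exp[-C]\cmax$ term via Bernoulli's inequality together with the crude bound $L\le\cmax$. The telescoping of the first step, by contrast, is mechanical once Lemma~\ref{lemma:degradation} is available.
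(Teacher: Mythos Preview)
Your proposal is correct and follows essentially the same two-step approach as the paper: telescope Lemma~\ref{lemma:degradation} over the $C/\be$ rounds to get $L(\cD,\bar h)\le L(\cD,\pi)+CT\ell_{\textrm{avg}}+\tfrac12 CT^2\be\,\cmax$, then bound the cost of removing $\pi$ by $T(1-\be)^{C/\be}\cmax\le T\exp[-C]\,\cmax$. Your treatment of the second step (the explicit conditioning argument and Bernoulli bound) is in fact more carefully spelled out than the paper's one-line version, which simply asserts the union bound on the probability of ever calling $\pi$.
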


This lemma states that after $C/\be$ iterations of \searn\ the learned
policy is not much worse than the quality of the initial policy $\pi$.
The theorem follows from a choice of the constants $\be$ and $C$ in
Lemma~\ref{thm:convergence}.

\begin{proof}
The proof involves invoking Lemma~\ref{lemma:degradation} $C/\be$
times.  The second and the third terms sum to give the following:

\begin{equation*}
L(\cD,h) \leq L(\cD,\pi) + C T \ell_{\textrm{avg}} + \cmax \left(\frac 1 2 C T^2 \be\right)
\end{equation*}

Last, if we call the initial policy, we fail with loss at most
$\cmax$.  The probability of failure after $C/\be$ iterations is at
most $T(1-\be)^{C/\be} \leq T \exp[-C]$.
\end{proof}

\section{Comparison to Alternative Techniques} \label{sec:alternatives}

Standard techniques for structured prediction focus on the case where
the $\arg\max$ in Eq~\eqref{eq:argmax} \emph{is} tractable.  Given its
tractability, they attempt to learn parameters $\th$ such that solving
Eq~\eqref{eq:argmax} often results in low loss.  There are a handful
of classes of such algorithms and a large number of variants of each.
Here, we focus on \emph{independent classifier} models,
\emph{perceptron-based} models, and \emph{global} models (such as
conditional random fields and max-margin Markov networks).  There are,
of course, alternative frameworks (see, eg.,
\cite{weston02kde,mcallester04cfd,altun04gp,mcdonald04margin,tsochantaridis05svmiso}),
but these are common examples.

\subsection{The $\arg\max$ Problem} \label{sec:argmax}

Many structured prediction problems construct a scoring function $F(y
\| x, \th)$.  For a given input $x \in \cX$ and set of parameters $\th
\in \Th$, $F$ provides a score for each possible output $y$.  This
leads to the ``$\arg\max$'' problem (also known as the decoding
problem or the pre-image problem), which seeks to find the $y$ that
maximizes $F$ in order to make a prediction.

\begin{equation} \label{eq:argmax}
\hat y = \arg\max_{y \in \cY_x} F(y \| x, \th)
\end{equation}

In Eq~\eqref{eq:argmax}, we seek the output $y$ from the
set $\cY_x$ (where $\cY_x \subseteq \cY$ is the set of all
``reasonable'' outputs for the input $x$ -- typically assumed to be
finite).  Unfortunately, solving Eq~\eqref{eq:argmax} exactly is
tractable only for very particular structures $\cY$ and scoring
functions $F$.  As an easy example, when $\cY_x$ is interpreted as a
label sequence and the score function $F$ depends only on adjacent
labels, then dynamic programming can be used, leading to an
$\order(nk^2)$ prediction algorithm, where $n$ is the length of the
sequence and $k$ is the number of possible labels for each element in
the sequence.  Similarly, if $\cY$ represents trees and $F$ obeys a
context-free assumption, then this problem can be solved in time
$\order(n^3k)$.

Often we are interested in more complex structures, more complex
features or both.  For such tasks, an exact solution to
Eq~\eqref{eq:argmax} is not tractable.  For example, In natural
language processing most statistical word-based and phrase-based
models of translation are known to be NP-hard
\cite{germann03decoding}; syntactic translations models based on
synchronous context free grammars are sometimes polynomial, but with
an exponent that is too large in practice, such as $n^{11}$
\cite{huang05hooks}.  Even in comparatively simple problems like
sequence labeling and parsing---which are only $\order(n)$ or
$\order(n^3)$---it is often still computationally prohibitive to
perform exhaustive search \cite{bikel03collins}.  For another sort of
example, in computational biology, most models for phylogeny
\cite{foulds82phylogeny} and protein secondary structure prediction
\cite{crescenzi98folding} result in NP-hard search problems.

When faced with such intractable search problem, the standard tactic
is to use an approximate search algorithm, such as greedy search, beam
search, local hill-climbing search, simulated annealing, etc.  These
search algorithms are unlikely to be provably optimal (since this
would imply that one is efficiently solving an NP-hard problem), but
the hope is that they perform well on problems that are observed in
the real world, as opposed to ``worst case'' inputs.

Unfortunately, applying suboptimal search algorithms to solve the
structured prediction problem from Eq~\eqref{eq:argmax} dispenses with
many nice theoretical properties enjoyed by sophisticated learning
algorithms.  For instance, it may be possible to learn Bayes-optimal
parameters $\th$ such that \emph{if} exact search were possible, one
would always find the best output.  But given that exact search is not
possible, such properties go away.  Moreover, given that different
search algorithms exhibit different properties and biases, it is easy
to believe that the value of $\th$ that is optimal for one search
algorithm is not the same as the value that is optimal for another
search algorithm.\footnote{In fact, \newcite{wainwright06wrong} has
provided evidence that when using approximation algorithms for
graphical models, it is important to use the same approximate at both
training and testing time.}  It is these observations that have
motivated our exploration of \emph{search-based} structured prediction
algorithms: learning algorithms for structured prediction that
explicitly model the search process.

\subsection{Independent Classifiers}

There are essentially two varieties of local classification techniques
applied to structured prediction problems.  In the first variety, the
structure in the problem is ignored, and a single classifier is
trained to predict each element in the output vector independently
\newcite{punyakanok01inference} or with dependence created by
enforcement of membership in $\cal{Y}_x$ constraints
\newcite{punyakanok05constrainted}.  The second variety is typified by
maximum entropy Markov models \cite{mccallum00memm}, though the basic
idea of MEMMs has also been applied more generally to SVMs
\cite{kudo01chunking,kudo03kernel,gimenez04svmtook}.  In this variety,
the elements in the prediction vector are made sequentially, with the
$n$th element conditional on outputs $n-k \dots n-1$ for a $k$th order
model.

In the purely independent classifier setting, both training and
testing proceed in the obvious way.  Since the classifiers make one
decision completely independently of any other decision, training
makes use only of the input.  This makes training the classifiers
incredibly straightforward, and also makes prediction easy.  In fact,
running \searn\ with $\Phi(x,y)$ independent of all but $y_n$ for the
$n$ prediction would yield exactly this framework (note that there
would be no reason to iterate \searn\ in this case).  While this
renders the independent classifiers approach attractive, it is also
significantly weaker, in the sense that one cannot define complex
features over the output space.  This has not thus far hindered its
applicability to problems like sequence labeling
\cite{punyakanok01inference}, parsing and semantic role labeling
\cite{punyakanok05role}, but does seem to be an overly strict
condition.  This also limits the approach to Hamming loss.

\searn\ is more similar to the MEMM-esque prediction setting.  The key
difference is that in the MEMM, the $n$th prediction is being made on
the basis of the $k$ previous predictions.  However, these predictions
are noisy, which potentially leads to the suboptimal performance
described in the previous section.  The essential problem is that the
models have been trained assuming that they make all previous
predictions correctly, but when applied in practice, they only have
predictions about previous labels.  It turns out that this can cause
them to perform nearly arbitrarily badly.  This is formalized in the
following theorem, due to Matti K{\"a}{\"a}ri{\"a}inen.

\begin{theorem}[\newcite{kaariainen06lower}] \label{thm:kaar}
There exists a distribution $\cD$ over first order binary Markov
problems such that training a binary classifier based on true previous
predictions to an error rate of $\ep>0$ leads to a Hamming loss given
in Eq~\eqref{eq:kaar}, where $T$ is the length of the sequence.

\begin{equation} \label{eq:kaar}
\frac T 2
 - \frac {1 - (1 - 2 \ep)^{T+1}} {4 \ep}
 + \frac 1 2
\approx
\frac T 2
\end{equation}

\noindent
Where the approximation is true for small $\ep$ or large $T$.
\end{theorem}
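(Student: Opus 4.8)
The plan is to exhibit a single distribution $\cD$ that realizes the bound; since the claim is an existence statement, one explicit bad example suffices. I would take the underlying first-order chain to be a \emph{noisy copy} process: fix a start symbol $y_0 = 0$ and generate the true labels by $y_n = y_{n-1} \oplus \xi_n$ for $n = 1, \dots, T$, where the $\xi_n$ are i.i.d.\ Bernoulli$(\ep)$ flip variables and $\oplus$ is addition mod $2$. The role of the noise is to make the per-step prediction problem genuinely hard: conditioned on the true previous label $y_{n-1}$, the next label equals $y_{n-1}$ with probability $1-\ep$ and differs with probability $\ep$, so the Bayes-optimal rule given the true previous label is simply ``copy $y_{n-1}$,'' and it incurs error exactly $\ep$. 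This is the classifier I would analyze: it is trained (optimally) to error rate $\ep$ on the true-previous-label distribution, matching the hypothesis of the theorem.

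Next I would turn to test-time behavior, where the classifier must condition on its own previous prediction $\hat y_{n-1}$ rather than on $y_{n-1}$. The copy rule then gives $\hat y_n = \hat y_{n-1}$, so defining $e_n$ to be $1$ when $\hat y_n \neq y_n$ and $0$ otherwise, we get $e_n = \hat y_n \oplus y_n = \hat y_{n-1} \oplus y_{n-1} \oplus \xi_n = e_{n-1} \oplus \xi_n$, with $e_0 = 0$. The crucial structural observation is that the error process is itself a symmetric two-state chain with the \emph{same} flip probability $\ep$: an error is never removed by a restoring force, it only diffuses. Writing $p_n = \Pr[e_n = 1]$ and conditioning on $e_{n-1}$ and $\xi_n$ yields the linear recursion
\begin{equation*}
p_n = \ep\,(1 - p_{n-1}) + (1-\ep)\,p_{n-1} = \ep + (1-2\ep)\,p_{n-1}, \qquad p_0 = 0 .
\end{equation*}

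The remaining work is routine. The recursion has fixed point $1/2$, so $p_n - \tfrac12 = (1-2\ep)(p_{n-1} - \tfrac12)$ and hence $p_n = \tfrac12\big(1 - (1-2\ep)^n\big)$, which drifts from $p_0 = 0$ toward the stationary value $1/2$. The expected Hamming loss is $\sum_{n=1}^T p_n$; summing the geometric series and using $\tfrac{1-2\ep}{4\ep} = \tfrac{1}{4\ep} - \tfrac12$ reproduces Eq~\eqref{eq:kaar} exactly. Finally, for fixed $\ep \in (0,1/2)$ the term $(1-2\ep)^{T+1} \to 0$ while $\tfrac{1}{4\ep}$ stays constant, so the loss approaches $\tfrac T2 - \tfrac1{4\ep} + \tfrac12 \approx \tfrac T2$ as $T$ grows, giving the stated approximation.

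I do not expect a genuine obstacle in the calculation; the conceptual crux—and the step I would argue most carefully—is the claim that the copy rule is an \emph{optimal} classifier achieving error exactly $\ep$ on the training task, so that the example is not an artifact of a deliberately crippled learner. Here one must stress that the $\ep$ of transition noise is irreducible: no function of the true previous label (indeed, since the chain is first-order Markov, no function of the entire true past) can predict $y_n$ with error below $\ep$, yet this unavoidable $\ep$ compounds to $\Theta(T)$ Hamming loss once the classifier is forced to feed on its own noisy outputs. That gap between the $\ep$ achievable under true-label conditioning and the $\approx T/2$ suffered under self-conditioning is the entire content of the theorem.
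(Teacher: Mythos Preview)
Your construction and calculation are correct: the noisy-copy chain with i.i.d.\ Bernoulli$(\ep)$ flips makes the copy rule Bayes-optimal with per-step error exactly $\ep$, the error indicator $e_n$ evolves as the symmetric two-state chain you describe, the closed form $p_n = \tfrac12\bigl(1-(1-2\ep)^n\bigr)$ follows, and the geometric sum reproduces Eq~\eqref{eq:kaar} verbatim. The argument that no function of the true past can beat error $\ep$ (so the example is not a straw man) is exactly the right point to emphasize.

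However, there is nothing to compare against: the paper does not supply a proof of this theorem. It is stated as a cited result due to K\"a\"ari\"ainen and is invoked only to motivate why training on true previous labels (as in a MEMM) can be arbitrarily bad; no argument, sketch, or construction appears in the text. Your write-up therefore stands on its own as a complete proof of the cited statement rather than as a reconstruction of anything the authors did. One minor remark: the paper's phrase ``for small $\ep$ or large $T$'' is loose---as your own formula shows, small $\ep$ with bounded $T$ gives loss near zero, not $T/2$; the approximation really requires $T \gg 1/\ep$, which your ``large $T$ with $\ep$ fixed'' analysis captures correctly.
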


Recently, \cite{cohen05stacked} has described an algorithm termed
\emph{stacked sequential learning} that attempts to remove this bias
from MEMMs in a similar fashion to \searn.  The stacked algorithm
learns a sequence of MEMMs, with the model trained on the $t+1$st
iteration based on outputs of the model from the $t$th iteration.  For
sequence labeling problems, this is quite similar to the behaviour of
\searn\ when $\be$ is set to $1$.  However, unlike \searn, the stacked
sequential learning framework is effectively limited to sequence
labeling problems.  This limitation arises from the fact that it
implicitly assumes that the set of decisions one must make in the
future are always going to be same, regardless of decisions in the
past.  In many applications, such as entity detection and tracking
\cite{daume05coref}, this is not true.  The set of possible choices
(actions) available at time step $i$ is heavily dependent on past
choices.  This makes the stacked sequential learning inapplicable in
these problems.

\subsection{Perceptron-Style Algorithms}

The structured perceptron is an extension of the standard perceptron
\cite{rosenblatt58perceptron} to structured prediction
\cite{collins02perceptron}.  Assuming that the $\arg\max$ problem is
tractable, the structured perceptron constructs the weight vector in
nearly an identical manner as for the binary case.  While looping
through the training data, whenever the predicted $\hat y_n$ for $x_n$
differs from $y_n$, we update the weights according to
Eq~\eqref{eq:perceptron-update}.

\begin{equation} \label{eq:perceptron-update}
\w \leftarrow \w + \Phi(x_n,y_n) - \Phi(x_n,\hat y_n)
\end{equation}

This weight update serves to bring the vector closer to the true
output and further from the incorrect output.  As in the standard
perceptron, this often leads to a learned model that generalizes
poorly.  As before, one solution to this problem is weight averaging
\cite{freund99perceptron}.

The \emph{incremental perceptron} \cite{collins04incremental} is a
variant on the structured perceptron that deals with the issue that
the $\arg\max$ may not be analytically available.  The idea of the
incremental perceptron is to replace the $\arg\max$ with a beam search
algorithm.  The key observation is that it is often possible to detect
in the process of executing search whether it is possible for the
resulting output to ever be correct.  The incremental perceptron is
essentially a search-based structured prediction technique, although
it was initially motivated only as a method for speeding up
convergence of the structured perceptron.  In comparison to \searn, it
is, however, much more limited.  It cannot cope with arbitrary loss
functions, and is limited to a beam-search application.  Moreover, for
search problems with a large number of internal decisions (such as
entity detection and tracking \cite{daume05coref}), aborting search at
the first error is far from optimal.

\subsection{Global Prediction Algorithms} \label{sec:ml:sp:crf}

Global prediction algorithms attempt to learn parameters that,
essentially, rank correct (low loss) outputs higher than incorrect
(high loss) alternatives.

Conditional random fields are an extension of logistic regression
(maximum entropy models) to structured outputs \cite{lafferty01crf}.
Similar to the structured perceptron, a conditional random field does
not employ a loss function, but rather optimizes a log-loss
approximation to the 0/1 loss over the entire output.  Only when the
features and structure are chosen properly can dynamic programming
techniques be used to compute the required partition function, which
typically limits the application of CRFs to linear chain models under
a Markov assumption.

The maximum margin Markov network (\mmmn) formalism considers the
structured prediction problem as a quadratic programming problem
\cite{taskar03mmmn,taskar05mmmn}, following the formalism for the
support vector machine for binary classification.  The \mmmn\
formalism extends this to structured outputs under a given loss
function $l$ by requiring that the \emph{difference} in score between
the true output $y$ and any incorrect output $\hat y$ is at least the
loss $l(x,y,\hat y)$ (modulo slack variables).  That is: the \mmmn\
framework scales the \emph{margin} to be proportional to the loss.
Under restrictions on the output space and the features (essentially,
linear chain models with Markov features) it is possible to solve the
corresponding quadratic program in polynomial time.

In comparison to CRFs and \mmmns, \searn\ is strictly more general.
\searn\ is limited neither to linear chains nor to Markov style
features and can effectively and efficiently optimize structured
prediction models under far weaker assumptions (see
Section~\ref{sec:summarization} for empirical evidence supporting this
claim).

\section{Experimental Results}

In this section, we present experimental results on two different
sorts of structured prediction problems.  The first set of
problems---the sequence labeling problems---are comparatively simple
and are included to demonstrate the application of \searn\ to easy
tasks.  They are also the most common application domain on which
other structured prediction techniques are tested; this enables us to
directly compare \searn\ with alternative algorithms on standardized
data sets.  The second application we describe is based on an
automatic document summarization task, which is a significantly more
complex domain than sequence labeling.  This task enables us to test
\searn\ on significantly more complex problems with loss functions
that do not decompose over the structure.

\subsection{Sequence Labeling} \label{sec:sequence}

Sequence labeling is the task of assigning a \emph{label} to each
element in an input sequence.  Sequence labeling is an attractive test
bed for structured prediction algorithms because it is the simplest
non-trivial structure.  Modern state-of-the-art structured prediction
techniques fare very well on sequence labeling problems.  In this
section, we present a range of results investigating the performance
of \searn\ on four separate sequence labeling tasks: handwriting
recognition, named entity recognition (in Spanish), syntactic chunking
and joint chunking and part-of-speech tagging.

For pure sequence labeling tasks (i.e., when segmentation is not also
done), the standard loss function is Hamming loss, which gives credit
on a per label basis.  For a true output $y$ of length $N$ and
hypothesized output $\hat y$ (also of length $N$), Hamming loss is
defined according to Eq~\eqref{eq:sequence:loss:hamming}.

\begin{align}
\ell^\textrm{Ham}(y,\hat y) &\defeq \sum_{n=1}^N \Ind\big[ y_n \neq \hat y_n \big]
  \label{eq:sequence:loss:hamming}
\end{align}

The most common loss function for joint segmentation and labeling
problems (like the named entity recognition and syntactic chunking
problems) is F$_1$ measure over chunks\footnote{We note in passing
that directly optimizing F$_1$ may not be the best approach, from the
perspective of integrating information in a pipeline
\cite{manning06f1}.  However, since F$_1$ is commonly used and does
\emph{not} decompose over the output sequence, we use it for the
purposes of demonstration.}.  F$_1$ is the geometric mean of precision
and recall over the (properly-labeled) chunk identification task,
given in Eq~\eqref{eq:sequence:loss:f}.

\begin{equation}
\ell^\textrm{F}(y,\hat y) \defeq
  \frac {2 \card{y \cap \hat y}} {\card{y} + \card{\hat y}}
  \label{eq:sequence:loss:f}
\end{equation}

As can be seen in Eq~\eqref{eq:sequence:loss:f}, one is penalized both
for identifying too many chunks (penalty in the denominator) and for
identifying too few (penalty in the numerator).  The advantage of
F$_1$ measure over Hamming loss seen most easily in problems where the
majority of words are ``not chunks''---for instance, in gene name
identification \cite{mcdonald05gene}---Hamming loss often prefers
a system that identifies \emph{no} chunks to one that identifies some
correctly and other incorrectly.  Using a weighted Hamming loss can
not completely alleviate this problem, for essentially the same
reasons that a weighted zero-one loss cannot optimize F$_1$ measure in
binary classification, though one can often achieve an approximation
\cite{lewis01filtering,musicant03fmeasure}.

\subsubsection{Handwriting Recognition} \label{sec:sequence:problems:handwriting}

\begin{figure}[t]
\center
\begin{minipage}[t]{10cm}
\psfig{figure=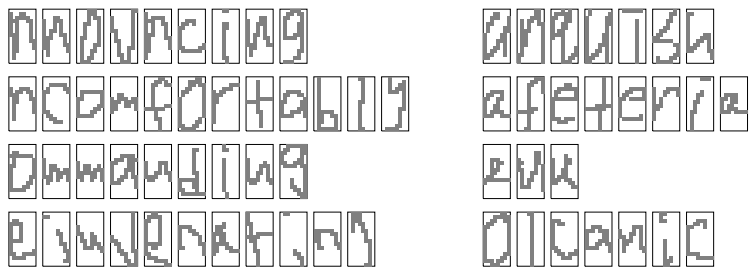,width=10cm}
\end{minipage}
\caption{Eight example words from the handwriting recognition data set.}
\label{fig:handwriting:example}
\end{figure}

The handwriting recognition task we consider was introduced by
\newcite{kassel95handwriting}.  Later, \newcite{taskar03mmmn}
presented state-of-the-art results on this task using max-margin
Markov networks.  The task is an image recognition task: the input is
a sequence of pre-segmented hand-drawn letters and the output is the
character sequence (``a''-``z'') in these images.  The data set we
consider is identical to that considered by \newcite{taskar03mmmn} and
includes 6600 sequences (words) collected from 150 subjects.  The
average word contains $8$ characters.  The images are $8 \times 16$
pixels in size, and rasterized into a binary representation.  Example
image sequences are shown in Figure~\ref{fig:handwriting:example} (the
first characters are removed because they are capitalized).

For each possible output letter, there is a unique feature that counts
how many times that letter appears in the output.  Furthermore, for
each pair of letters, there is an ``edge'' feature counting how many
times this pair appears adjacent in the output.  These edge features
are the \emph{only} ``structural features'' used for this task (i.e.,
features that span multiple output labels).  Finally, for every output
letter and for every pixel position, there is a feature that counts
how many times that pixel position is ``on'' for the given output
letter.

In the experiments, we consider two variants of the data set.  The
first, ``small,'' is the problem considered by \newcite{taskar03mmmn}.
In the small problem, ten fold cross-validation is performed over the
data set; in each fold, roughly 600 words are used as training data
and the remaining 6000 are used as test data.  In addition to this
setting, we also consider the ``large'' reverse experiment: in each
fold, 6000 words are used as training data and 600 are used as test
data.

\subsubsection{Spanish Named Entity Recognition} \label{sec:sequence:problems:sner}

\begin{figure}[t]
El presidente de la [Junta de Extremadura]\lab{ORG} , [Juan Carlos Rodr\'iguez Ibarra]\lab{PER} , recibir\'a en la sede de la [Presidencia del Gobierno]\lab{ORG} extreme\~no a familiares de varios de los condenados por el proceso `` [Lasa-Zabala]\lab{MISC} '' , entre ellos a [Lourdes D\'iez Urraca]\lab{PER} , esposa del ex gobernador civil de [Guip\'uzcoa]\lab{LOC} [Julen Elgorriaga]\lab{PER} ; y a [Antonio Rodr\'iguez Galindo]\lab{PER} , hermano del general [Enrique Rodr\'iguez Galindo]\lab{PER} .
\caption{Example labeled sentence from the Spanish Named Entity Recognition task.}
\label{fig:ner:example}
\end{figure}

The named entity recognition (NER) task is concerned with spotting
names of persons, places and organizations in text.  Moreover, in NER
we only aim to spot \emph{names} and neither pronouns (``he'') nor
nominal references (``the President'').  We use the CoNLL 2002 data
set, which consists of $8324$ training sentences and $1517$ test
sentences; examples are shown in Figure~\ref{fig:ner:example}.  A
$300$-sentence subset of the training data set was previously used by
\newcite{tsochantaridis05svmiso} for evaluating the \SVMISO\ framework
in the context of sequence labeling.  The small training set was
likely used for computational considerations.  The best reported
results to date using the full data set are due to
\newcite{ando05transfer}.  We report results on both the ``small'' and
``large'' data sets.

The structural features used for this task are roughly the same as in
the handwriting recognition case.  For each label, each label pair and
each label triple, a feature counts the number of times this element
is observed in the output.  Furthermore, the standard set of input
features includes the words and simple functions of the words (case
markings, prefix and suffix up to three characters) within a window of
$\pm 2$ around the current position.  These input features are paired
with the current label.  This feature set is fairly standard in the
literature, though \newcite{ando05transfer} report significantly
improved results using a much larger set of features.  In the results
shown later in this section, all comparison algorithms use identical
feature sets.

\subsubsection{Syntactic Chunking} \label{sec:sequence:problems:chunking}

\begin{figure}[t]
[Great American]\lab{NP}
[said]\lab{VP}
[it]\lab{NP}
[increased]\lab{VP}
[its loan-loss reserves]\lab{NP}
[by]\lab{PP}
[\$ 93 million]\lab{NP}
[after]\lab{PP}
[reviewing]\lab{VP}
[its loan portfolio]\lab{NP}
,
[raising]\lab{VP}
[its total loan and real estate reserves]\lab{NP}
[to]\lab{PP}
[\$ 217 million]\lab{NP}
.
\caption{Example labeled sentence from the syntactic chunking task.}
\label{fig:chunking:example}
\end{figure}

The final sequence labeling task we consider is syntactic chunking
(for English), based on the CoNLL 2000 data set.  This data set
includes $8936$ sentences of training data and $2012$ sentences of
test data.  An example is shown in Figure~\ref{fig:chunking:example}.
(Several authors have considered the \emph{noun-phrase chunking} task
instead of the full syntactic chunking task.  It is important to
notice the difference, though results on these two tasks are typically
very similar, indicating that the majority of the difficulty is with
noun phrases.)

We use the same set of features across all models, separated into
``base features'' and ``meta features.''  The base features apply to
words individually, while meta features apply to entire chunks.  The
standard base features used are: the chunk length, the word (original,
lower cased, stemmed, and original-stem), the case pattern of the
word, the first and last 1, 2 and 3 characters, and the part of speech
and its first character.  We additionally consider membership features
for lists of names, locations, abbreviations, stop words, etc.  The
meta features we use are, for any base feature $b$, $b$ at position $i$
(for any sub-position of the chunk), $b$ before/after the chunk, the
entire $b$-sequence in the chunk, and any 2- or 3-gram tuple of $b$s
in the chunk.  We use a first order Markov assumption (chunk label only
depends on the most recent previous label) and all features are placed
on labels, not on transitions.  In the results shown later in this
section, some of the algorithms use a slightly different feature set.
In particular, the CRF-based model uses similar, but not identical
features; see \cite{sutton05bagging} for details.

\subsubsection{Joint Chunking and Tagging} \label{sec:sequence:problems:joint}

In the preceding sections, we considered the single sequence labeling
task: to each element in a sequence, a single label is assigned.  In
this section, we consider the \emph{joint} sequence labeling task.  In
this task, each element in a sequence is labeled with \emph{multiple}
tags.  A canonical example of this task is joint POS tagging and
syntactic chunking \cite{sutton04fcrfs}.  An example sentence jointly
labeled for these two outputs is shown in
Figure~\ref{fig:joint:example} (under the BIO encoding).

\begin{figure}[t]
Great\jointlab{B-NP}{NNP}
American\jointlab{I-NP}{NNP}
said\jointlab{B-VP}{VBD}
it\jointlab{B-NP}{PRP}
increased\jointlab{B-VP}{VBD}
its\jointlab{B-NP}{PRP\$}
loan-loss\jointlab{I-NP}{NN}
reserves\jointlab{I-NP}{NNS}
by\jointlab{B-PP}{IN}
\$\jointlab{B-NP}{\$}
93\jointlab{I-NP}{CD}
million\jointlab{I-NP}{CD}
after\jointlab{B-PP}{IN}
reviewing\jointlab{B-VP}{VBG}
its\jointlab{B-NP}{PRP\$}
loan\jointlab{I-NP}{NN}
portfolio\jointlab{I-NP}{NN}
.\jointlab{O}{.}
\caption{Example sentence for the joint POS tagging and syntactic
  chunking task.}
\label{fig:joint:example}
\end{figure}

For \searn, there is little difference between standard sequence
labeling and joint sequence labeling.  We use the same data set as for
the standard syntactic chunking task
(Section~\ref{sec:sequence:problems:chunking}) and essentially the
same features.  In order to model the fact that the two streams of
labels are not independent, we decompose the problem into two parallel
tagging tasks.  First, the first POS label is determined, then the
first chunk label, then the second POS label, then the second chunk
label, etc.  The only difference between the features we use in this
task and the vanilla chunking task has to do the structural features.
The structural features we use include the obvious Markov features on
the individual sequences: counts of singleton, doubleton and tripleton
POS and chunk tags.  We also use ``crossing sequence'' features.  In
particular, we use counts of pairs of POS and chunk tags at the same
time period as well as pairs of POS tags at time $t$ and chunk tags at
$t-1$ and vice versa.

\subsubsection{Search and Initial Policies} \label{sec:sequence:searn}

The choice of ``search'' algorithm in \searn\ essentially boils down
to the choice of output vector representation, since, as defined,
\searn\ always operates in a left-to-right manner over the output
vector.  In this section, we describe vector representations for the
output space and corresponding optimal policies for \searn.

The most natural vector encoding of the sequence labeling problem is
simply as itself.  In this case, the search proceeds in a greedy
left-to-right manner with one word being labeled per step.  This
search order admits some linguistic plausibility for many natural
language problems.  It is also attractive because (assuming unit-time
classification) it scales as $\order(NL)$, where $N$ is the length of
the input and $L$ is the number of labels, independent of the number
of features or the loss function.  However, this vector encoding is
also highly biased, in the sense that it is perhaps not optimal for
some (perhaps unnatural) problems.  Other orders are possible (such as
allowing any arbitrary position to be labeled at any time, effectively
mimicing belief propagation); see \cite{daume06thesis} for more
experimental results under alternative orderings.

For joint segmentation and labeling tasks, such as named entity
identification and syntactic chunking, there are two natural
encodings: word-at-a-time and chunk-at-a-time.  In word-at-a-time, one
essentially follows the ``BIO encoding'' and tags a single word in
each search step.  In chunk-at-a-time, one tags single \emph{chunks}
in each search step, which can consist of multiple words (after fixing
a maximum phrase length).  In our experiments, we focus exclusively on
chunk-at-a-time decoding, as it is more expressive (feature-wise) and
has been seen to perform better in other scenarios
\cite{sarawagi04scrf}).

Under the chunk-at-a-time encoding, an input of length $N$ leads to a
vector of length $N$ over $M\times L + 1$ labels, where $M$ is the
maximum phrase length.  The interpretation of the first $M\times L$
labels, for instance $(m,l)$ means that the next phrase is of length
$m$ and is a phrase of type $l$.  The ``$+1$'' label corresponds to a
``complete'' indicator.  Any vector for which the sum of the ``$m$''
components is not exactly $N$ attains maximum loss.

\subsubsection{Initial Policies}

For the sequence labeling problem under Hamming loss, the optimal
policy is always to label the next word correctly.  In the
left-to-right order, this is straightforward.  For the segmentation
problem, word-at-a-time and chunk-at-a-time behave very similarly with
respect to the loss function and optimal policy.  We discuss
word-at-a-time because its notationally more convenient, but the
difference is negligible.  The optimal policy can be computed by
analyzing a few options in Eq~\eqref{eq:op:f}

\begin{equation} \label{eq:op:f}
\pi(x,y_{1:T},\hat y_{1:t-1}) =
\brack{
\textrm{begin } X \quad\quad\quad & y_t = \textrm{begin } X \\
\textrm{in } X    & y_t = \textrm{in } X \textrm{ and } \hat y_{t-1} \in \{ \textrm{begin } X, \textrm{in } X \} \\
\textrm{out}      & \textrm{otherwise}
}
\end{equation}

It is easy to show that this policy is optimal (assuming noise-free
training data).  There is, however, another equally optimal policy.
For instance, if $y_t$ is ``in $X$'' but $\hat y_{t-1}$ is ``in $Y$''
(for $X \neq Y$), then it is equally optimal to select $\hat y_t$ to
be ``out'' or ``in $Y$''.  In theory, when the optimal policy does not
care about a particular decision, one can randomize over the
selection.  However, in practice, we always default to a particular
choice to reduce noise in the learning process.

For all of the policies described above, it is also straightforward to
compute the optimal approximation for estimating the expected cost of
an action.  In the Hamming loss case, the loss is $0$ if the choice is
correct and $1$ otherwise.  The computation for F$_1$ loss is a bit
more complicated: one needs to compute an optimal intersection size
for the future and add it to the past ``actual'' size.  This is also
straightforward by analyzing the same cases as in Eq~\eqref{eq:op:f}.

\subsubsection{Experimental Results and Discussion} \label{sec:sequence:comparison}

\begin{table}[t]
\center
\begin{tabular}{|l|cc|cc|c|c|}
\hline
{\bf ALGORITHM} &
  \multicolumn{2}{c|}{\bf Handwriting} &
                         \multicolumn{2}{c|}{\bf NER} &
                                                         {\bf Chunk} &
                                                                        {\bf C+T} \\
&
Small & Large &
Small & Large & & \\
\hline
{\bf CLASSIFICATION} &&&&&& \\
~~~~~{\bf Perceptron} & $65.56$ & $70.05$     & $91.11$ & $94.37$     & $83.12$     & $87.88$ \\
~~~~~{\bf Log Reg}    & $68.65$ & $72.10$     & $93.62$ & $96.09$     & $85.40$     & $90.39$ \\
~~~~~{\bf SVM-Lin}    & $75.75$ & $82.42$     & $93.74$ & $97.31$     & $86.09$     & $93.94$ \\
~~~~~{\bf SVM-Quad}   & $82.63$ & $82.52$     & $85.49$ & $85.49$     & \slow       & \slow   \\
\hline
{\bf STRUCTURED} &&&&&& \\
~~~~~{\bf Str. Perc.} & $69.74$ & $74.12$     & $93.18$ & $95.32$     & $92.44$     & $93.12$ \\
~~~~~{\bf CRF}        & \unav   & \unav       & $94.94$ & \slow       & $94.77$     & $96.48$ \\
~~~~~{\bf \SVMISO}     & \unav   & \unav       & $94.90$ & \slow       & \unav       & \unav   \\
~~~~~{\bf \mmmn-Lin}  & $81.00$ & \slow       & \unav   & \unav       & \unav       & \unav   \\
~~~~~{\bf \mmmn-Quad} & $87.00$ & \slow       & \unav   & \unav       & \unav       & \unav   \\
\hline
{\bf SEARN} &&&&&& \\
~~~~~{\bf Perceptron} & $70.17$ & $76.88$     & $95.01$ & $97.67$     & $94.36$     & $96.81$ \\
~~~~~{\bf Log Reg}    & $73.81$ & $79.28$     & $95.90$ & $98.17$     & $94.47$     & $96.95$ \\
~~~~~{\bf SVM-Lin}    & $82.12$ & $90.58$     & $95.91$ & $98.11$     & $94.44$     & $96.98$ \\
~~~~~{\bf SVM-Quad}   & $87.55$ & $90.91$     & $89.31$ & $90.01$     & \slow       & \slow   \\
\hline
\end{tabular}
\caption{Empirical comparison of performance of alternative structured
  prediction algorithms against \searn\ on sequence labeling tasks.
  (Top) Comparison for whole-sequence 0/1 loss; (Bottom) Comparison
  for individual losses: Hamming for handwriting and Chunking+Tagging
  and F for NER and Chunking.  \searn\ is always optimized for the
  appropriate loss.}
\label{tab:sequence:comparison}
\end{table}

In this section, we compare the performance of \searn\ to the
performance of alternative structured prediction techniques over the
data sets described above.  The results of this evaluation are shown
in Table~\ref{tab:sequence:comparison}.  In this table, we compare raw
classification algorithms (perceptron, logistic regression and SVMs)
to alternative structured prediction algorithms (structured
perceptron, CRFs, \SVMISOs\ and \mmmns) to \searn\ with three baseline
classifiers (perceptron, logistic regression and SVMs).  For all SVM
algorithms and for \mmmns, we compare both linear and quadratic
kernels (cubic kernels were evaluated but did not lead to improved
performance over quadratic kernels).

For all \searn-based models, we use the the following settings of the
tunable parameters (see \cite{daume06thesis} for a comparison of
different settings).  We use the optimal approximation for the
computation of the per-action costs.  We use a left-to-right search
order with a beam of size 10.  For the chunking tasks, we use
chunk-at-a-time search.  We use weighted all pairs and costing to
reduce from cost-sensitive classification to binary classification.

Note that some entries in Table~\ref{tab:sequence:comparison} are
missing.  The vast majority of these entries are missing because the
algorithm considered could not reasonably scale to the data set under
consideration.  These are indicated with a ``\slow'' symbol.  Other
entries are not available simply because the results we report are
copied from other publications and these publications did not report
all relevant scores.  These are indicated with a ``\unav'' symbol.

We observe several patterns in the results from
Table~\ref{tab:sequence:comparison}.  The first is that structured
techniques consistently outperform their classification counterparts
(eg., CRFs outperform logistic regression).  The single exception is
on the small handwriting task: the quadratic SVM outperforms the
quadratic \mmmn.\footnote{However, it should be noted that a different
implementation technique was used in this comparison.  The \mmmn\ is
based on an SMO algorithm, while the quadratic SVM is libsvm
\cite{libsvm}.}  For all classifiers, adding \searn\ consistently
improves performance.

An obvious pattern worth noticing is that moving from the small data
set to the large data set results in improved performance, regardless
of learning algorithm.  However, equally interesting is that simple
classification techniques when applied to large data sets outperform
complicated learning techniques applied to small data sets.  Although
this comparison is not completely fair---both algorithms should get
access to the same data---if the algorithm (like the \SVMISO\ or the
\mmmn) cannot scale to the large data set, then something is missing.
For instance, a vanilla SVM on the large handwriting data set
outperforms the \mmmn\ on the small set.  Similarly, a vanilla
logistic regression classifier trained on the large NER data set
outperforms the \SVMISO\ and the CRF on the small data sets.

On the same data set, \searn\ can perform comparably or better than
competing structured prediction techniques.  On the small handwriting
task, the two best performing systems are \mmmns\ with quadratic
kernels ($87.0\%$ accuracy) and \searn\ with quadratic SVMs ($87.6\%$
accuracy).  On the NER task, \searn\ with a perceptron classifier
performs comparably to \SVMISO\ and CRFs (at around $95.9\%$
accuracy).  On the Chunking+Tagging task, all varieties of \searn\
perform comparatively to the CRF.  In fact, the only task on which
\searn\ does \emph{not} outperform the competing techniques is on the
raw chunking task, for which the CRF obtains an F-score of $94.77$
compared to $94.47$ for \searn, using a significantly different
feature set.

The final result from Table~\ref{tab:sequence:comparison} worth
noticing is that, with the exception of the handwriting recognition
task, \searn\ using logistic regression as a base learner performs at
the top of the pack.  The SVM-based \searn\ models typically perform
slightly better, but not significantly.  In fact, the raw averaged
perceptron with \searn\ performs almost as well as the logistic
regression.  This is a nice result because the SVM-based models tend
to be expensive to train, especially in comparison to the perceptron.
The fact that this pattern does not hold for the handwriting task is
likely due to the fact that the data for this task is quite unlike the
data for the other tasks.  For the handwriting task, there are a
comparatively small number of features which are individually much less
predictive of the class.  It is only in combination that good
classifiers can be learned.

While these results are useful, they should be taken with a grain of
salt.  Sequence labeling is a very easy problem.  The structure is
simple and the most common loss functions decompose over the
structure.  The comparatively good performance of raw classifiers
suggests that the importance of structure is minor.  In fact, some
results suggest that one need not actually consider the structure at
all for some such problems
\cite{punyakanok01inference,punyakanok05constrainted}.

\subsection{Automatic Document Summarization} \label{sec:summarization}

Multidocument summarization is the task of creating a summary out of a
collection of documents on a focused topic.  In query-focused
summarization, this topic is given explicitly in the form of a user's
query.  The dominant approach to the multidocument summarization
problem is sentence extraction: a summary is created by greedily
extracting sentences from the document collection until a pre-defined
word limit is reached.  \newcite{teufel97sentence} and
\newcite{lin02multi} describe representative examples.  Recent work in
sentence compression \cite{knight-marcu02,mcdonald06compression} and
document compression \cite{daume02noisychannel} attempts to take small
steps beyond sentence extraction.  Compression models can be seen as
techniques for extracting sentences then dropping extraneous
information.  They are more powerful than simple sentence extraction
systems, while remaining trainable and tractable.  Unfortunately,
their training hinges on the existence of $\langle$ sentence,
compression $\rangle$ pairs, where the compression is obtained from
the sentence by only dropping words and phrases (the work of
\newcite{turner05compression} is an exception).  Obtaining such data
is quite challenging.

The exact model we use for the document summarization task is a novel
``vine-growth'' model, described in more detail in
\cite{daume06thesis}.  The vine-growth method uses syntactic parses of
the sentence in the form of \emph{dependency} structures.  In the
vine-growth model, if a word $w$ is to be included in the summary,
then all words closer to the tree root are included.

\subsubsection{Search Space and Actions}

The search algorithm we employ for implementing the vine-growth model
is based on incrementally \emph{growing} summaries.  In essence,
beginning with an empty summary, the algorithm incrementally adds
words to the summary, either by beginning a new sentence or growing
existing sentences.  At any step in search, the root of a new sentence
may be added, as may any direct child of a previously added node.
To see more clearly how the vine-growth model functions, consider
Figure~\ref{fig:example-sentence}.  This figure shows a four step
process for creating the summary ``the man ate a sandwich .'' from the
original document sentence ``the man ate a big sandwich with pickles
.''

\begin{figure}[t]
\psfig{figure=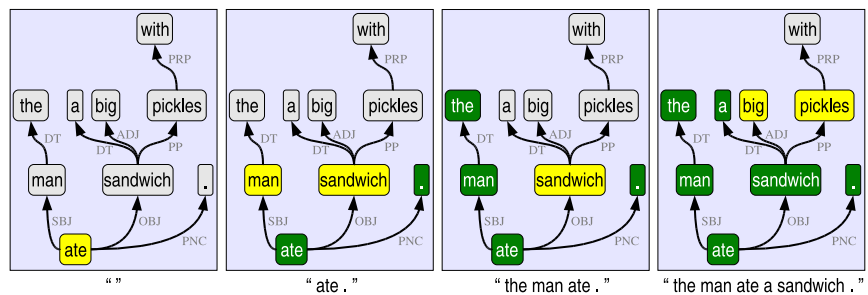,width=11.5cm}
\caption{An example of the creation of a summary under the vine-growth
  model.}
\label{fig:example-sentence}
\end{figure}

When there is more than one sentence in the source documents, the
search proceeds asynchronously across all sentences.  When the
sentences are laid out adjacently, the end summary is obtained by
taking all the green summary nodes once a pre-defined word limit has
been reached.  This final summary is a collection of subtrees grown
off a sequence of underlying trees: hence the name ``vine-growth.''

\subsubsection{Data and Evaluation Criteria} \label{sec:summarization:eval}

For data, we use the DUC 2005 data set \cite{duc05}.  This consists of
50 document collections of 25 documents each; each document collection
includes a human-written query.  Each document collection additionally
has five human-written ``reference'' summaries (250 words long, each)
that serve as the gold standard.  In the official DUC evaluations, all
50 collections are ``test data.''  However, since the DUC 2005 task is
significantly different from previous DUC tasks, there is no a good
source of training data.  Therefore, we report results based on
10-fold cross validation.  We train on 45 collections and test on the
remaining 5.

Automatic evaluation is a notoriously difficult problem for document
summarization.  The current popular choice for metric is Rouge
\cite{lin-naacl03}, which (roughly speaking) computes $n$-gram overlap
between a system summary and a set of human written summaries.  In
various experiments, Rouge has been seen to correspond with human
judgment of summary quality.  In the experiments described in this
chapter, we use the ``Rouge 2'' metric, which uses evenly weighted
bigram scores.

\subsubsection{Initial Policy} \label{sec:summarization:searn}

Computing the best label completion under Rouge metric for the
vine-growth model is intractable.  The intractability stems from the
model constraint that a word can only be added to a summary after its
parent is added.  We therefore use an approximate, search-based policy
(see Section~\ref{sec:searn:policy:search-based}).  In order to
approximate the cost of a given partial summary, we \emph{search} for
the best possible completion.  That is, if our goal is a 100 word summary
and we have already created a 50 word summary, then we execute beam
search (beam size $20$) for the remaining 50 words that maximize the
Rouge score.

\subsubsection{Feature Functions}

Features in the vine-growth model may consider any aspect of the
currently generated summary, and any part of the input document set.
These features include simple lexical features: word identity, stem
and part of speech of the word under consideration, the syntactic
relation with its parent, the position and length of the sentence it
appears in, whether it appears in quotes, the length of the document
it appears in, the number of pronouns and attribution verbs in the
subtree rooted at the word.  The features also include language model
probabilities for: the word, sentence and subtree under language
models derived from the query, a \bayesum\ representation of the
query, and the existing partial summary.

\subsubsection{Experimental Results}

Experimental results are shown in
Table~\ref{tab:results:summarization}.  We report Rouge scores for
summaries of length 100 and length 250.  We compare the following
systems.  First, oracle systems that perform the summarization task
\emph{with} knowledge of the true output, attempting to maximize the
Rouge score.  We present results for an oracle sentence extraction
system (Extr) and an oracle vine-growth system (Vine).  Second, we
present the results of the \searn-based systems, again for both
sentence extraction (Extr) and vine-growth (Vine).  Both of these are
trained with respect to the oracle system.  (Note that it is
impossible to compare against competing structured prediction
techniques.  This summarization problem, even in its simplified form,
is far too complex to be amenable to other methods.)  For comparison,
we present results from the \bayesum\ system
\cite{daume05duc,daume06bqfs}, which achieved the highest score
according to human evaluations of responsiveness in DUC 05.  This
system, as submitted to DUC 05, was \emph{trained} on DUC 2003 data;
the results for this configuration are shown in the ``D03'' column.
For the sake of fair comparison, we also present the results of this
system, trained in the same cross-validation approach as the
\searn-based systems (column ``D05'').  Finally, we present the
results for the baseline system and for the best DUC 2005 system
(according to the Rouge 2 metric).

\begin{table}[t]
\center
\begin{small}
\begin{tabular}{|l||c|c||c|c||c|c||c|c|}
\hline
 & \multicolumn{2}{c||}{\bf ORACLE}
 & \multicolumn{2}{c||}{\bf SEARN}
 & \multicolumn{2}{c||}{\bf BAYESUM}
 & & \\
 & Vine & Extr
 & Vine & Extr
 & D05 & D03
 & Base & Best \\
\hline
{\bf 100 w}
 & .0729
 & .0362
 & .0415
 & .0345
 & .0340
 & .0316
 & .0181
 & - \\
{\bf 250 w}
 & .1351
 & .0809
 & .0824
 & .0767
 & .0762
 & .0698
 & .0403
 & .0725 \\
\hline
\end{tabular}
\end{small}
\caption{Summarization results; values are Rouge 2 scores (higher is better).}
\label{tab:results:summarization}
\end{table}

As we can see from Table~\ref{tab:results:summarization} at the 100
word level, sentence extraction is a nearly solved problem for this
domain and this evaluation metric.  That is, the oracle sentence
extraction system yields a Rouge score of $0.0362$, compared to the
score achieved by the \searn\ system of $0.0345$.  This difference is
on the border of statistical significance at the $95\%$ level.  The
next noticeable item in the results is that, although the \searn-based
\emph{extraction} system comes quite close to the theoretical optimal,
the oracle results for the \emph{vine-growth} method are significantly
higher.  Not surprisingly, under \searn, the summaries produced by the
vine-growth technique are uniformally better than those produced by
raw extraction.  The last aspect of the results to notice is how the
\searn-based models compare to the best DUC 2005 system, which
achieved a Rouge score of $0.0725$.  The \searn-based systems
uniformly dominate this result, but this comparison is not fair due to
the training data.  We can approximate the expected improvement for
having the new training data by comparing the \bayesum\ system when
trained on the DUC 2005 and DUC 2003 data: the improvement is $0.0064$
absolute.  When this result is added to the best DUC 2005 system, its
score rises to $0.0789$, which is better than the \searn-based
extraction system but not as good as the vine-growth system.  It
should be noted that the best DUC 2005 system \emph{was} a purely
extractive system \cite{ye05duc}.

\section{Discussion and Conclusions} \label{sec:discussion}

In this paper, we have:

\begin{itemize}
\item Presented an algorithm, \searn, for solving complex structured
  prediction problems with minimal assumptions on the structure of the
  output and loss function.

\item Compared the performance of \searn\ against standard structured
  prediction algorithms on standard sequence labeling tasks, showing
  that it is competitive with existing techniques.

\item Described a novel approach to summarization---the vine-growth
  method---and applied \searn\ to the underlying learning problem,
  yielding state-of-the-art performance on standardized summarization
  data sets.
\end{itemize}

There are many lenses through which one can view the \searn\
algorithm.

From an applied perspective, \searn\ is an easy technique for training
models for which complex search algorithms must be used.  For
instance, when using multiclass logistic regression as a base
classifier for Hamming loss, the first iteration of \searn\ is
identical to training a maximum entropy Markov model.  The subsequent
iterations of \searn\ can be seen as attempting to get around the fact
that MEMMs are trained \emph{assuming} all previous decisions are made
correctly.  This assumption is false, of course, in practice.  Similar
recent algorithms such a decision-tree-based parsing
\cite{turian06parsing} and perceptron-based machine translation
\cite{liang06endtoend} can also be seen as running a (slightly
modified) first iteration of \searn.

\searn\ contrasts with more typical algorithms such as CRFs and
\mmmns\ based on considering how information is shared at test time.
Standard algorithms use exact (typically Viterbi) search to share full
information across the entire output, ``trading off'' one decision for
another.  \searn\ takes an alternative approach: it attempts to share
information at \emph{training time.}  In particular, by training the
classifier using a loss based on both past experience and future
expectations, the training attempts to integrate this information
during learning.  This is not unsimilar to the ``alternative
objective'' proposed by \newcite{kakade02objective} for CRFs.  One
approach is not necessarily better than the other; they are simply
different ways to accomplish the same goal.

One potential limitation to \searn\ is that when one trains a new
classifier on the output of a previous iteration's classifier, it is
usually going to be the case that previous iteration's classifier
performs better on the training data than on the test data.  This
means that, although training via \searn\ is likely preferable to
training against \emph{only} an initial policy, it can still be overly
optimistic.  Based on the experimental evidence, it appears that this
has yet to be a serious concern, but it remains worrisome.  There are
two easy ways to combat this problem.  The first is simply to attempt
to ensure that the learned classifiers do not overfit at all.  In
practice, however, this can be difficult.  Another approach with a
high computational cost is cross-validation.  Instead of training one
classifier in each \searn\ step, one could train ten, each holding out
a different $10\%$ of the data.  When asked to run the ``current''
classifier on an example, the classifier not trained on the example is
used.  This does not completely remove the possiblity of overfitting,
but significantly lessens its likelihood.

A second limitation, pointed out by \newcite{zhang06searn}, is that
there is a slight disparity between what \searn\ does at a theoretical
level and how \searn\ functions in practice.  In particular, \searn\
does not \emph{actually} start with the optimal policy.  Even when we
can compute the initial policy exactly, the ``true outputs'' on which
this initial policy are based are potentially noisy.  This means that
while $\pi$ is optimal for the \emph{noisy} data, it is not optimal
for the \emph{true} data distribution.  In fact, it is possible to
construct noisy distributions where \searn\ performs poorly.\footnote{
One can construct such a noisy distribution as follows.  Suppose there
is fundamental noise and a ``safe'' option which results in small
loss.  Suppose this safe option is always more than a one step
deviation from the highly noisy ``optimal'' sequence.  \searn\ can be
confused by this divergence.}  Finding other initial policies which
are closer to optimal in these situations is an open problem.

\searn\ obeys a desirable theoretical property: given a good
classification algorithm, one is guaranteed a good structured
prediction algorithm.  Importantly, this result is independent of the
size of the search space or the tractability of the search method.
This shows that local learning---when done properly---can lead to good
global performance.  From the perspective of applied machine learning,
\searn\ serves as an interpreter through which engineers can easily
make use of state-of-the-art machine learning techniques.

In the context of structured prediction algorithms, \searn\ lies
somewhere between global learning algorithms, such as \mmmns\ and
CRFs, and local learning algorithms, such as those described
\newcite{punyakanok01inference}.  The key difference between \searn\
and global algorithms is in how uncertainty is handled.  In global
algorithms, the search algorithm is used at test time to propagate
uncertainty across the structure.  In \searn, the prediction costs are
used during training time to propagate uncertainty across the
structure.  Both contrast with local learning, in which no uncertainty
is propagated.

From a wider machine learning perspective, \searn\ makes more apparent
the connection between reinforcement learning and structured
prediction.  In particular, structured prediction can be viewed as a
reinforcement learning problem in a degenerate world in which all
observations are available at the initial time step.  However, there
are clearly alternative middle-grounds between pure structured
prediction and full-blown reinforcement learning (and natural
applications---such as planning---in this realm) for which this
connection might serve to be useful.

Despite these successes, there is much future work that is possible.
One significant open question on the theoretical side is that of
sample complexity: ``How many examples do we need in order to achieve
learning under additional assumptions?''  Related problems of
semi-supervised and active learning in the \searn\ framework are also
interesting and likely to produce powerful extensions.  Another vein
of research is in applying \searn to domains other than language.
Structured prediction problems arise in a large variety of settings
(vision, biology, system design, compilers, etc.).  For each of these
domains, different sorts of search algorithms and different sorts of
features are necessary.  Although \searn\ has been discussed largely
as a method for solving structured prediction problems, it is, more
generally, a method for integrating \emph{search} and \emph{learning}.
This leads to potential applications of \searn\ that fall strictly
outside the scope of structured prediction.

\bibliographystyle{plain}
\bibliography{bibfile}

\end{document}